\newcommand{\tirage}[1]{\widetilde{#1}}
\newcommand{\probacond}[3]{P_{#3}(#2\ |\ #1)}
\newcommand{\probacondsign}[1]{P_{#1}}
\newcommand{\ACincons}[1]{\breve{#1}}
\newcommand{\varAlgo}[1]{\mathtt{X}[#1]}
\newcommand{\domAlgo}[1]{\mathtt{D}[#1]}
\newcommand{\cstrAlgo}[1]{\mathtt{C}[#1]}
\newcommand{\remAlgo}[1]{\mathtt{R}[#1]}
\newcommand{\solAlgo}[3]{\mathtt{S_{#1}}[#2][#3]}
\newcommand{\espAlgo}[1]{\mathtt{E}[#1]}
\newcommand{\espNetwork}[1]{\mathtt{EN}}
\newcommand{\probaAlgo}[2]{\mathtt{P}[#1][#2]}
\newcommand{\tmp}{\mathtt{tmp}}
\newtheorem{coroll}{Corollary}
\title{A Probabilistic-Based Model for Binary CSP}
\author{
Amine Balafrej\inst{1} \and Xavier Lorca\inst{1} \and Charlotte Truchet\inst{2}}
\institute{TASC - Mines-Nantes, INRIA, LINA UMR 6241\\
\email{\{Amine.Balafrej,Xavier.Lorca\}@mines-nantes.fr}
\and CELTIQUE, IRISA - UMR 6074, Rennes, France\\
 \email{Charlotte.Truchet@univ-nantes.fr}
}
\begin{document}
\mainmatter  % start of an individual contribution

\maketitle

\begin{abstract}
This work introduces a probabilistic\nobreakdash-based model for binary CSP that provides a fine grained analysis of its internal structure. 
Assuming that a domain modification could occur in the CSP, it shows how to express, in a predictive way, the probability that a domain value becomes inconsistent, 
then it express the expectation of the number of arc-inconsistent values in each domain of the constraint network. 
Thus, it express the expectation of the number of arc-inconsistent values for the whole constraint network. 
Next, it provides bounds for each of these three probabilistic indicators.
Finally, a polytime algorithm, which propagates the probabilistic information, is presented.
\end{abstract}

\section{Introduction} \label{intro}
The core of constraint programming, i.e. its operational nature, depends on the propagation\nobreakdash-research mechanism: the propagation part tries to infer new information from the current states of variables, while the search part, most of the time, consists of a depth-first exploration of the search space.
Propagation and search must generally be intricated because of the NP-completeness of the CSPs.
Thus, finding a fair balance between efficiency, in terms of calculation time, and effective performance in terms of filtering, has always been a major issue in the constraint programming community.

This paper goes one step further by reporting a probabilistic analysis of the constraint network associated with each constraint satisfaction problem (CSP). This leads to a probabilistic\nobreakdash-based model for binary CSP that allows us to better understand both the macro-structure (i.e., interactions between variables through the constraints) and the micro-structure as defined in \cite{Jegou93a} (i.e., interactions between compatible values) of a binary CSP.
The contribution of this paper consists on a theoretical analysis of the constraint networks from a probabilistic point of view:
\begin{enumerate}
\item it is shown how to compute in a predictive way the probability for each value of a domain to be arc\nobreakdash-inconsistent, under the hypothesis that a domain modification occurs in the CSP;  
\item next, it is demonstrated how to aggregate this information for the whole domain and for the whole constraint network; 
\item then, these results are approximated by lower bounds; 
\item finally, a polytime algorithm is proposed to compute these probabilistic informations.
\end{enumerate}

% The paper is decomposed into four sections: After the introduction, Section~\ref{intro}, Section~\ref{background} introduces the required material to present our contribution;
% %  Section~\ref{relwork} puts into perspective our proposal with respect to the state-of-the-art ; 
% The main result is reported by Section~\ref{probaModel}; % with the theoretical probabilistic\nobreakdash-based model we associate with a constraint network; 
% % Section~\ref{consistency} presents a use case as well as the results obtained by our proposal; 
% Finally, Section~\ref{conclusion} concludes. 

\section{Background material and notations} \label{background}
We consider the classical definition of binary constraint networks.
A binary constraint network ${N}$ is a triplet $\nobreak{<\mathcal{X},\mathcal{D},\mathcal{C}>}$, where $\mathcal{X}$ is a set of $n$ variables, $\mathcal{D}$ the set of their finite domains, and $\mathcal{C}$ the binary constraints, which are assumed to be unique without loss of generality. 
We write:
%\begin{itemize}
%\item 
$C_{ij}$ the constraint between $X_i$ and $X_j$, $\mathcal{C}(X_i)$ the set of constraints involving $X_i$, and $\Gamma(X_i)=\{X_j | \exists C_{ij} \in \mathcal{C}\}$;
%\item 
$S_{ij}$ the set of solutions of $C_{ij}$ alone;
%\item 
Given a value $w\in D_i$, $S_{ij}^{w}=\{(w,v_j) | (w, v_j) \in S_{ij} \}$ the supports of $w$;
$S$ the set of solutions of the network, that is, the values in $D_1 \times... \times D_n$ satisfying all the constraints;
%\item 
$\pi_i$  the $i$-th projection of $\mathbb{N}^n$.
%\end{itemize}
%
\emph{Constraint propagation} aims at detecting values in the domains that cannot satisfy at least one constraint. Propagation is based on the consistency property, for which there are several, more or less powerful, definitions.
\begin{definition}[Arc-consistency or AC]\label{arcConsistency}
A \emph{value} $v_i \in D_i$ is AC for $C_{ij}$ if and only if
$\exists v_j \in D_j$, s.t. $v_j\in\pi_j(S_{ij}^{v_i})$.
%In this case, $v_j$ is called \emph{AC support} of $v_i$ on $C_{ij}$. 
A \emph{domain} $D_i$ is AC on $C_{ij}$ if and only if 
$D_i \neq \emptyset$ and $\forall v_i \in D_i$, $v_i$ is AC on $C_{ij}$.
A domain $D_i$ is AC if and only if it is AC on any $C_{ij}\in \mathcal{C}(X_i)$. 
A \emph{constraint network} $N=<\mathcal{X}, \mathcal{D}, \mathcal{C}>$ is AC if and only if any $D_i \in \mathcal{D}$ is AC.
\end{definition}
Consequently, a value $w \in D_i$ is \emph{arc-inconsistent} on a constraint $C_{ij}$ if and only if $\nobreak{\pi_j(S_{ij}^{w})=\emptyset}$. 
Such a value is written $\ACincons{w}$. %, such a value can be removed from its domain without loosing solutions to the whole problem.

%%%%%%%%%%%%%%%%%%%%%%%%%%%%%%%%%%%%%%%%%%%%%%
\section{A probabilistic\nobreakdash-based model for CSP}  
\label{probaModel}
Constraint networks are difficult to analyze as a whole. 
Solving methods often focus on the state of one variable inside the network, which is called the \emph{microstructure}. 
In addition, solving methods use criteria based on statistical information based on the past states of variable/constraint.
However, such a point of view looses much information, since no information on the future state is considered and the global structure of the network is ignored.
This section introduce an original probabilistic model for binary CSP that allows us to define criteria based on the future state of variables considering both the \emph{macrostructure} and the \emph{microstructure} of the constraint network.

Assuming that a domain will be modified, we want to know which values will become more likely to be arc-inconsistent.
We first introduce a probabilistic\nobreakdash-based model of the network, which allows us to properly define the domain modifications 
%(such as instanciation of a variable or removal of a value) 
as probabilistic events. Then, we give a calculation of the probability for a value to be arc-inconsistent after a domain modification (here, removing a fixed number of values), considering the whole constraint network. %Finally, we show how to incrementally compute these probabilities or bounds on these probabilities in practice.
Let $N=<\mathcal{X}, \mathcal{D}, \mathcal{C}>$ a constraint network, we build a 
%\emph{probabilized CSP} (PCSP), and its associated 
\emph{probabilized network} $\tirage{N}$, from $N$ by associating to each domain $D_i$ a \emph{random variable} $\tirage{D}_i$ such that $\tirage{D}_i \subset D_i$.
All these random variables are drawn independently, i.e., they are randomly and uniformly chosen as a fixed length subdomain of $D_i$. In this network, we are interested in particular events: the domain modifications.  We will consider what happens when a domain $D_i$ is reduced. Knowing that $k$ values have been removed  in $D_i$, we randomize \emph{which} $k$ values have been removed from $D_i$. The number of such values is denoted $r_k(D_i)$.

\subsection{Probabilistic model for arc-inconsistency} \label{ProbaNetwork}
%Consider a CSP and its corresponding uniform PCSP with the above notations. It is possible to analyze the internal structure of the constraint network with probabilistic quantities in the PCSP. In the following, 
First, we detail how to compute the probability for a value $w$ of a given variable $X_i$ to be arc-inconsistent on a constraint $C_{ij}$ given an event $r_k(D_j)$. %, only considering  first one single constraint $C_{ij}$ holding on this variable, and next its neighborhood, i.e. $\forall X_j\in\Gamma(X_i)$. 
%Secondly, from this we deduce the probability that a variable has a consistent domain after an event, still considering only one constraint. 
%Thirdly, we use the same calculations to evaluate the probability that a variable's domain remains arc-consistent, considering all the constraints, and provide bounds on this quantity.
Second, we go one step further by evaluating the expectation of the number of arc-inconsistent values,
% [after a domain modification, for one, and then for all the constraints in the neighborhood of a given variable.]
in a domain $D_i$ of a variable $X_i$, according to a potential event, $r_{k_j}(D_j)$, occurring on the domain of a variable $X_j$ in the neighborhood of $X_i$.

\begin{figure*}[t]
\begin{center}
    \subfigure[{\small Initial state}]{%
      \includegraphics[width=0.28\textwidth]{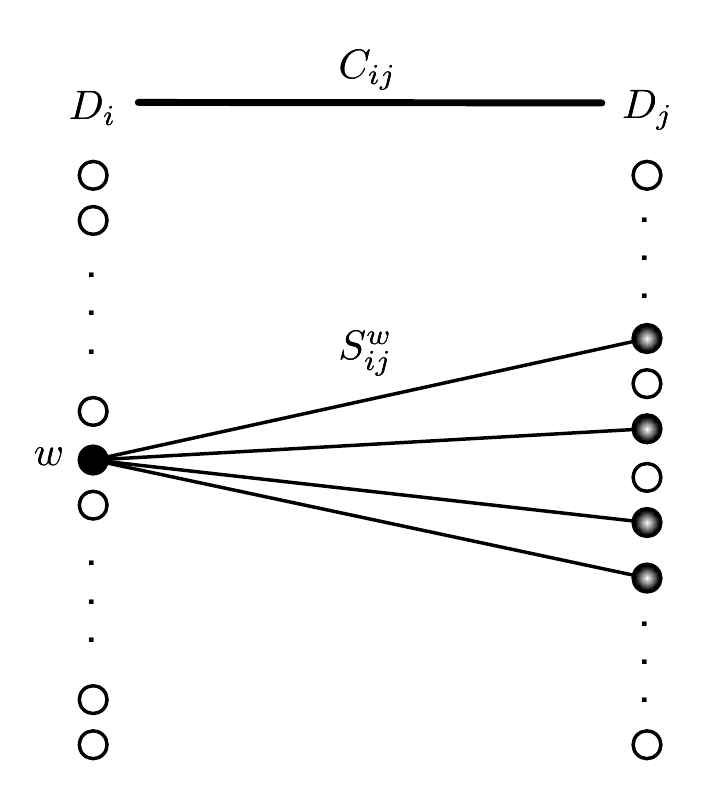} \label{main}
    }
%     \quad
    \subfigure[{\small $k<4$}]{%
      \includegraphics[width=0.33\textwidth]{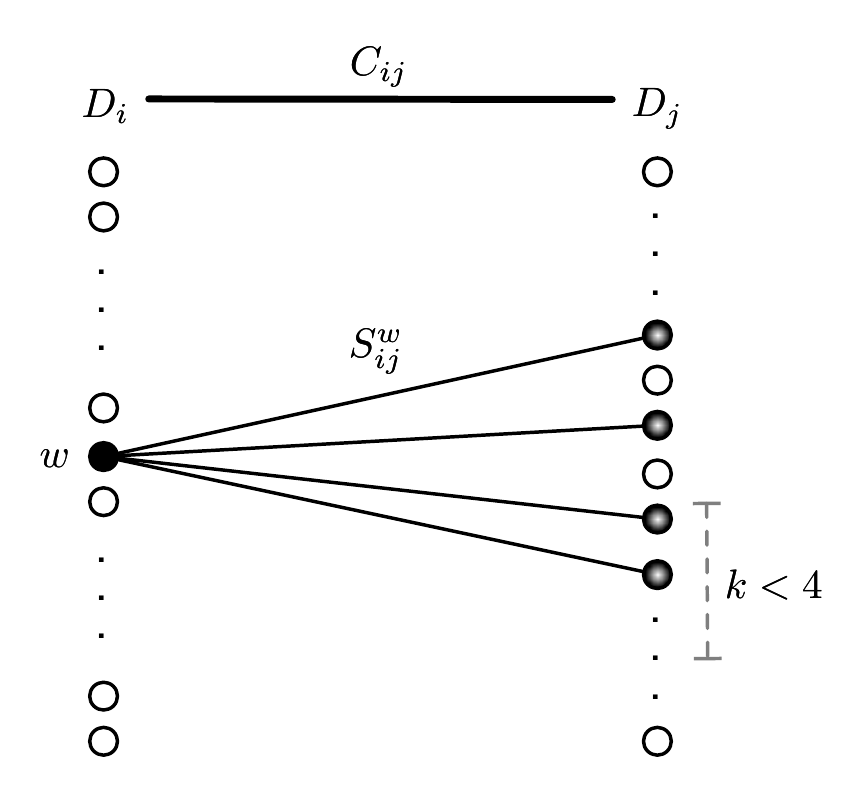} \label{nullProb}
    }
%     \quad
    \subfigure[{\small $k\geq4$}]{%
      \includegraphics[width=0.33\textwidth]{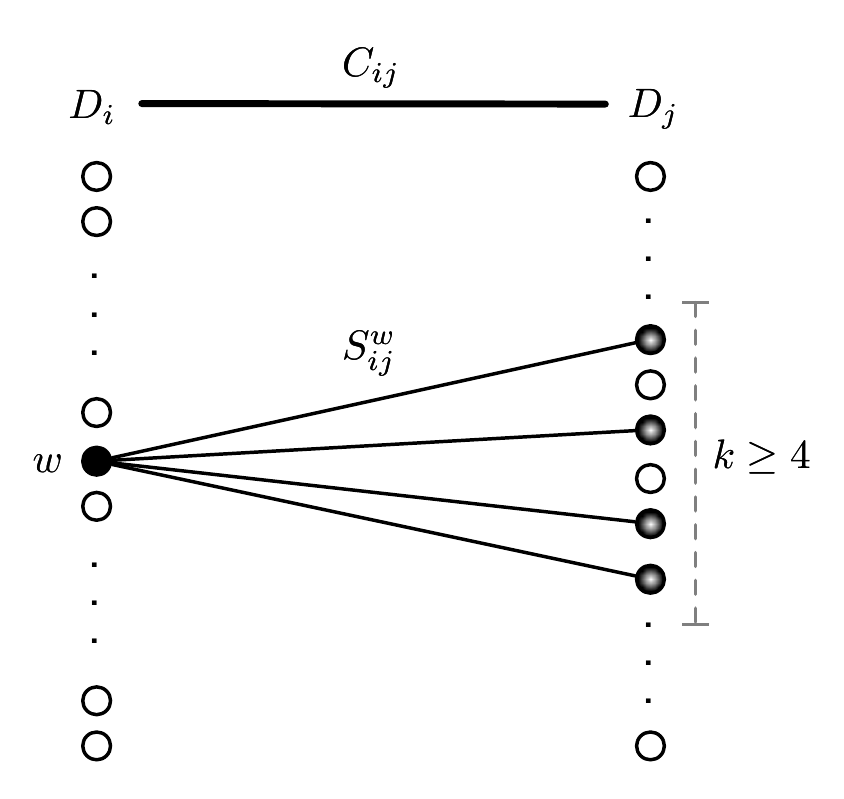} \label{nonNullProb}
    }
\end{center}
    \caption{Probability of being arc-inconsistent on a constraint $C_{ij}$ for a single value $w \in D_i$ according to a possible distribution of $k$ values removed from $D_j$.}
    \label{figure:probOneValue}
 \end{figure*}

Consider the example provided by Figure~\ref{figure:probOneValue}, it first depicts the supports $S^{w}_{ij}$ of a value $w\in D_i$ on a constraint $C_{ij}$ (Figure~\ref{main}). An interesting question to predict the \emph{importance} of the value $w$ could be its capacity to be arc-inconsistent. Then, a basic information has to be formalized: 
%\begin{itemize}
%\item 
if it is assumed that at most $3$ values could be removed from $D_j$ (Figure~\ref{nullProb}) then, there is no chances for value $w$ to be arc-inconsistent on $C_{ij}$ (none of the possible combinations of $k<4$ values in $D_j$ could remove all the supports of $w$);
%\item 
Otherwise, if it is assumed that at least $4$ values could be removed from $D_j$ (Figure~\ref{nonNullProb}) then, there is a chance for a value $w$ to be arc-inconsistent on $C_{ij}$ and we want to evaluate this chance.
%\end{itemize} 
% {\color{red}[peut-on supprimer la phrase suivante ??]}[Thus, a crucial information for the resolution process is to determine whether $w$ is arc\nobreakdash-inconsistent after a possible domain modification of $D_j$.] 
From a probabilistic point of view, this information can be translated into the probability for $w$ of being arc-inconsistent on $C_{ij}$ according to value(s) removal(s) in $D_j$. %(e.g., in Figure~\ref{figure:probOneValue}, the $k$ values removed from $D_j$).
In the following the projection $\pi_j(S_{ij}^{w})$ will be denoted by $\pi_j^{w}$.

\begin{proposition} {\label{probaValue}}
For a value $w \in D_{i}$, the probability of being arc-inconsistent on a constraint $C_{ij}$ in the probabilized constraint network $\tirage{N}$, knowing that $k$ values have been removed from $D_j$, is: 
\begin{equation}\nonumber
\probacond{r_k(D_j)}{\ACincons{w}}{c_{ij}}=\left\{
\begin{array}{l l r}
1,& \text{if }|\pi_j^{w}|=0 & (a) \\
& & \\
\prod\limits_{\ell=1}^{|\pi_j^{w}|}\frac{k-|\pi_j^{w}|+\ell}{|D_j|-|\pi_j^{w}|+\ell}, & \text{if }k\geq|\pi_j^{w}|>0 & (b)\\
& & \\
0,& \text{if }|\pi_j^{w}|>k\geq 0 & (c)\\
\end{array}
\right.
\end{equation}
\end{proposition}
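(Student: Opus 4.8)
The plan is to reduce the statement to an elementary counting problem for the uniform distribution over the size\nobreakdash-$k$ subsets of $D_j$, and then to simplify the resulting ratio of binomial coefficients into the announced product. First I would make the combinatorial content of the event explicit: by Definition~\ref{arcConsistency} and the remark following it, the value $w$ is arc\nobreakdash-inconsistent on $C_{ij}$ in $\tirage{N}$ precisely when the projection of its remaining supports is empty, i.e. when every value of $\pi_j^{w}$ has been deleted from $D_j$. Since the event $r_k(D_j)$ fixes the number of deleted values to $k$, and the deleted set $R\subseteq D_j$ is, by construction of $\tirage{N}$, drawn uniformly among the $\binom{|D_j|}{k}$ subsets of $D_j$ of size $k$, the quantity $\probacond{r_k(D_j)}{\ACincons{w}}{c_{ij}}$ is exactly the probability that $\pi_j^{w}\subseteq R$.

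Next I would dispatch the two degenerate cases. If $|\pi_j^{w}|=0$, then $w$ has no support on $C_{ij}$ before any modification, hence none afterwards, so the probability is $1$; this is case~$(a)$. If $|\pi_j^{w}|>k\ge 0$, then no set $R$ of size $k$ can contain the set $\pi_j^{w}$, so the event is impossible and the probability is $0$; this is case~$(c)$. In both cases the hypotheses, together with the inclusion $\pi_j^{w}\subseteq D_j$, guarantee that the quantities involved are well defined.

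For the main case $k\ge|\pi_j^{w}|>0$, write $p=|\pi_j^{w}|$ and $d=|D_j|$. The number of $k$\nobreakdash-subsets of $D_j$ containing the fixed $p$\nobreakdash-subset $\pi_j^{w}$ equals the number of ways to choose the remaining $k-p$ elements among the $d-p$ values outside $\pi_j^{w}$, namely $\binom{d-p}{k-p}$; hence the probability is $\binom{d-p}{k-p}\big/\binom{d}{k}$. Expanding the binomial coefficients and cancelling the common factor $(d-k)!$ gives $\frac{(d-p)!\,k!}{(k-p)!\,d!}$, and using $\frac{k!}{(k-p)!}=\prod_{\ell=1}^{p}(k-p+\ell)$ together with $\frac{(d-p)!}{d!}=\frac{1}{\prod_{\ell=1}^{p}(d-p+\ell)}$ yields exactly $\prod_{\ell=1}^{p}\frac{k-p+\ell}{d-p+\ell}$, i.e. case~$(b)$. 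Equivalently, one may obtain the same product directly by revealing the $k$ removed values one at a time and multiplying the conditional probabilities $\frac{k-p+\ell}{d-p+\ell}$, for $\ell=1,\dots,p$, that the $\ell$\nobreakdash-th still\nobreakdash-undeleted support of $w$ lies among them.

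There is no genuine obstacle here beyond bookkeeping; the only point that deserves care is the justification that, conditionally on $r_k(D_j)$, the deleted set is uniform over the $k$\nobreakdash-subsets of $D_j$ — which is precisely how $\tirage{N}$ was defined — and keeping the index range of the product consistent through the two reindexings of the factorials.
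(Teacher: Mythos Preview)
Your argument is correct and follows essentially the same route as the paper: both proofs identify the event $\{\ACincons{w}\}$ with the inclusion $\pi_j^{w}\subseteq R$, count the favourable $k$-subsets as $\binom{|D_j|-|\pi_j^{w}|}{\,k-|\pi_j^{w}|}$, and then expand the ratio of binomials into the announced product. The only cosmetic difference is that the paper names these favourable subsets ``$k$-Supports'' and is terser on cases~$(a)$ and~$(c)$, while you add an optional sequential derivation of the same product.
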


\begin{proof}
We first recall that by definition we have $|D_j|\geq |\pi_j^{w}|$ and $|D_j|\geq k\geq 0$.
For the cases $(a)$ and (c) the proof is direct from Definition~\ref{arcConsistency} (precisely arc-inconsistency for (a)).
In order to build the proof for the case $(b)$, we introduce, for a value $w \in D_i$, the concept of \emph{k\nobreakdash-Support} which denotes any subset of $k$ values of a given domain $D_j$ containing all the support values for $w$ on the constraint $C_{ij}$. To choose a  k-Support, you only need to choose $(k-|\pi_j^{w}|)$ values outside the support values for $w$, hence the number of k-supports is:

\begin{equation}\nonumber
\emph{\#k-Supports}=\binom{|D_j|-|\pi_j^{w}|}{k-|\pi_j^{w}|}
\end{equation}

Then, for a value $w\in D_{i}$, the probability of being arc-inconsistent on a constraint $C_{ij}$ knowing that $k$ values have been removed from the domain $D_j$ is the probability of removing one of the \emph{k-supports} of $w$ on $C_{ij}$, thus 

\begin{equation}\nonumber
\probacond{r_k(D_j)}{\ACincons{w} }{c_{ij}} = \frac{\binom{|D_j|-|\pi_j^{w}|}{k-|\pi_j^{w}|}}{\binom{|D_j|}{k}}\\ 
\end{equation}

By developing the binomials, the fraction $\frac{\binom{|D_j|-|\pi_j^{w}|}{k-|\pi_j^{w}|}}{\binom{|D_j|}{k}}$ leads to $\frac{k!}{(k-|\pi_j^{w}|)!}\times\frac{(|D_j|-|\pi_j^{w}|)!}{|D_j|!}$
and consequently to 
$\prod\limits_{\ell=1}^{|\pi_j^{w}|}\frac{k-|\pi_j^{w}|+\ell}{|D_j|-|\pi_j^{w}|+\ell}$
\qed
\end{proof}

\begin{figure}
  \center
\includegraphics[width=0.7\linewidth]{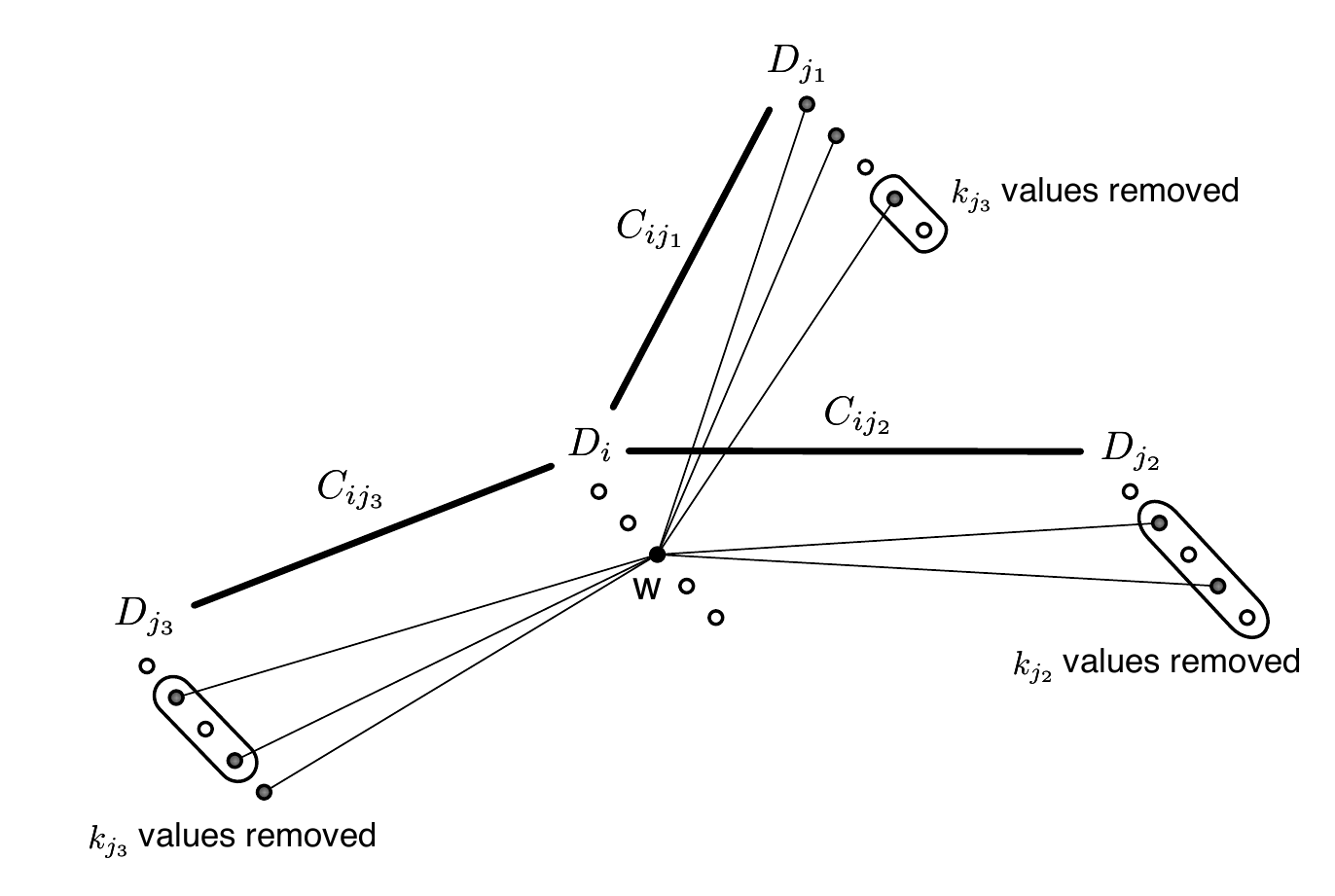}
  \caption{The probability of being arc-inconsistent for a value in the constraint network}\label{figure:probOneValueNetwork}
\end{figure}

% Figure~\ref{figure:probOneValueNetwork} depicts how the informations of Figure~\ref{figure:probOneValue} are aggregated beyond the constraints themselves.
An interesting information for the whole constraint network is the probability, for a value $w\in D_i$, to be arc-inconsistent for any constraint involving $D_i$.
Figure~\ref{figure:probOneValueNetwork} depicts an example of variable domain $D_i$ involved in three constraints 
with an event $r_{k_j}(D_j)$ in the domain of each variable $X_j\in\Gamma(X_i)$. 
For each event $r_{k_j}(D_j)$, a value $w\in D_i$ has a probability $\probacond{r_{k_j}(D_j)}{\ACincons{w} }{c_{ij}}$ of being arc-inconsistent on the constraint $C_{ij}$.
All these probabilities are aggregated and Proposition~\ref{probaValueAllConstr} provides the probability of being arc-inconsistent for a value $w\in D_{i}$ beyond the constraints themselves.

\begin{proposition} \label{probaValueAllConstr}
For a value $w \in D_{i}$, the probability of being arc-inconsistent in the probabilized constraint network $\tirage{N}$, knowing that $k_j$ values have been removed from each $D_j \in \Gamma(X_i)$, denoted $\probacond{\forall X_j\in\Gamma(X_i), r_{k_j}(D_j)}{\ACincons{w} }{N}$,  is equal to 

\begin{equation}
% \probacond{\forall X_j\in\Gamma(X_i), r_{k_j}(D_j)}{\ACincons{w} }{N} = 1-(\prod\limits_{X_j\in\Gamma(X_i)}^{}(1-\probacond{r_{k_j}(D_j)}{\ACincons{w} }{c_{ij}}))
1-(\prod\limits_{X_j\in\Gamma(X_i)}^{}(1-\probacond{r_{k_j}(D_j)}{\ACincons{w} }{c_{ij}}))
\end{equation}
\end{proposition}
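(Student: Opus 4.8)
The plan is to compute the probability of the complementary event: namely that $w$ remains arc-consistent on \emph{every} constraint $C_{ij}$ with $X_j \in \Gamma(X_i)$. The key observation is that the random subdomains $\tirage{D}_j$ are, by construction of $\tirage{N}$, drawn independently across the variables $X_j$. Hence the events ``$w$ is arc-inconsistent on $C_{ij}$'' for distinct neighbours $X_j$ depend on disjoint collections of these independent random variables, and therefore the events ``$w$ stays arc-consistent on $C_{ij}$'' are mutually independent.

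Concretely, first I would fix $w \in D_i$ and, for each $X_j \in \Gamma(X_i)$, let $A_j$ denote the event that $w$ is arc-inconsistent on $C_{ij}$ under the modification $r_{k_j}(D_j)$. By Proposition~\ref{probaValue}, $\probaa{A_j} = \probacond{r_{k_j}(D_j)}{\ACincons{w}}{c_{ij}}$. The event that $w$ is arc-inconsistent \emph{somewhere} in the network is $\bigcup_{X_j\in\Gamma(X_i)} A_j$, so the event that $w$ is arc-consistent throughout is $\bigcap_{X_j\in\Gamma(X_i)} \overline{A_j}$. Second, I would invoke independence: since each $A_j$ is measurable with respect to the $\sigma$-algebra generated by $\tirage{D}_j$ alone, and the $\tirage{D}_j$ are drawn independently, the family $\{\overline{A_j}\}$ is independent, giving $\probaa{\bigcap_j \overline{A_j}} = \prod_j (1 - \probaa{A_j})$. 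Finally, taking the complement yields $\probacond{\forall X_j\in\Gamma(X_i), r_{k_j}(D_j)}{\ACincons{w}}{N} = 1 - \prod_{X_j\in\Gamma(X_i)} (1 - \probacond{r_{k_j}(D_j)}{\ACincons{w}}{c_{ij}})$, which is the claimed formula.

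The main obstacle, and the only point that deserves care, is the independence argument: one must be sure that ``arc-inconsistent on $C_{ij}$'' truly depends only on which values survive in $\tirage{D}_j$ and not, say, on $\tirage{D}_i$ itself. This is the case because $w$ is a \emph{fixed} value of $D_i$ (we are conditioning on $w$ being present), and arc-inconsistency of $w$ on $C_{ij}$ is the condition $\pi_j(S_{ij}^{w}) \cap \tirage{D}_j = \emptyset$, a function of $\tirage{D}_j$ only. So each $A_j$ lies in the $\sigma$-algebra $\sigma(\tirage{D}_j)$, these $\sigma$-algebras are independent by the construction of $\tirage{N}$, and the product formula follows. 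I would also remark that the formula degenerates gracefully: if some $\probaa{A_j}=1$ (case $(a)$ of Proposition~\ref{probaValue}) the whole product vanishes and the probability is $1$, as expected, and if all $\probaa{A_j}=0$ (case $(c)$) the probability is $0$.
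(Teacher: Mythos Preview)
Your proof is correct and follows essentially the same route as the paper: pass to the complement (arc-consistency on every $C_{ij}$), multiply the per-constraint probabilities, and take $1$ minus the product. You are in fact more careful than the paper, which simply asserts the product formula without spelling out that each event $A_j$ depends only on $\tirage{D}_j$ and that the $\tirage{D}_j$ are independent by construction of $\tirage{N}$.
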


\begin{proof}
 For a value $w\in D_{i}$, the probability of being arc-consistent on a constraint $C_{ij}$, knowing the event $r_k(D_j)$, is $1-\probacond{r_{k_j}(D_j)}{\ACincons{w} }{c_{ij}}$. 
Then, the probability of being arc consistent knowing that $k_j$ values have been removed from each $D_j \in \Gamma(X_i)$,  is equal to:

\begin{equation}
 \prod\limits_{X_j\in\Gamma(X_i)}^{}(1-\probacond{r_{k_j}(D_j)}{\ACincons{w} }{c_{ij}}) \nonumber
\end{equation}
And so, the probability of being arc-inconsistent is equal to

\begin{equation}
% \probacond{\forall X_j\in\Gamma(X_i), r_{k_j}(D_j)}{\ACincons{w} }{N}=1-(\prod\limits_{X_j\in\Gamma(X_i)}^{}(1-\probacond{r_{k_j}(D_j)}{\ACincons{w} }{c_{ij}})) \nonumber
1-(\prod\limits_{X_j\in\Gamma(X_i)}^{}(1-\probacond{r_{k_j}(D_j)}{\ACincons{w} }{c_{ij}})) \nonumber
\end{equation}
\qed
\end{proof}

Once we are able to express the probability of being arc-inconsistent for a value $w\in D_i$, 
we want to evaluate the number of arc-inconsistent values we expect found in $D_i$.
Propositions~\ref{expectationDomain} express this expectation for a single domain and Propositions~\ref{expectationNetwork} generalises this result for the whole constraint network. 

\begin{proposition}\label{expectationDomain}
 The expected number of arc-inconsistent values in a domain $D_i$ knowing that $k_j$ values have been removed from each $D_j\in\Gamma(X_i)$, denoted $E(\tirage{D}_{i})$, is
\begin{equation}
E(\tirage{D}_i)=\sum\limits_{w_i\in D_i} \probacond{\forall X_j\in\Gamma(X_i), r_{k_j}(D_j)}{\ACincons{w}_i \in D_i}{N}
\end{equation}
\end{proposition}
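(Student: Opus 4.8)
The plan is to use linearity of expectation over a family of indicator random variables, one per value of $D_i$. First I would define, for each value $w_i \in D_i$, the Bernoulli random variable $\mathbf{1}_{w_i}$ on the probabilized network $\tirage{N}$ that equals $1$ exactly when $w_i$ is arc-inconsistent (i.e.\ arc-inconsistent on at least one constraint $C_{ij}$ with $X_j \in \Gamma(X_i)$), given the events $r_{k_j}(D_j)$ for all $X_j \in \Gamma(X_i)$, and $0$ otherwise. By construction, the number of arc-inconsistent values found in $D_i$ after these domain modifications is the random variable $\sum_{w_i \in D_i} \mathbf{1}_{w_i}$, and $E(\tirage{D}_i)$ is by definition its expectation under the conditioning on $\forall X_j\in\Gamma(X_i),\ r_{k_j}(D_j)$.

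Next I would apply linearity of expectation. Since $D_i$ is finite, the expectation of the sum is the sum of the expectations, and the expectation of a Bernoulli variable is the probability of the underlying event; combining this with Proposition~\ref{probaValueAllConstr} (which gives precisely that probability) yields
\begin{equation}\nonumber
E(\tirage{D}_i) = E\Bigl(\sum_{w_i \in D_i} \mathbf{1}_{w_i}\Bigr) = \sum_{w_i \in D_i} E(\mathbf{1}_{w_i}) = \sum_{w_i \in D_i} \probacond{\forall X_j\in\Gamma(X_i), r_{k_j}(D_j)}{\ACincons{w}_i \in D_i}{N},
\end{equation}
which is exactly the claimed formula.

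There is no serious obstacle here; the one point that deserves explicit care is to invoke \emph{linearity} of expectation rather than any product/independence rule. Indeed the events ``$w_i$ is arc-inconsistent'' for distinct values $w_i \in D_i$ are in general \emph{not} independent, because they all depend on the same random subdomains $\tirage{D}_j$ of the neighbours $X_j$; fortunately linearity of expectation holds without any independence assumption, so the argument goes through unchanged. Everything else is just the definition of the expectation of an indicator together with the already-established Proposition~\ref{probaValueAllConstr}.
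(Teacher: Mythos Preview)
Your proposal is correct and follows essentially the same approach as the paper: define a $\{0,1\}$ indicator per value $w\in D_i$, compute its expectation as the arc-inconsistency probability from Proposition~\ref{probaValueAllConstr}, and apply linearity of expectation to the finite sum. Your explicit remark that linearity requires no independence among the indicators is a useful clarification that the paper leaves implicit.
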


\begin{proof}
For each value $w\in D_i$, we define the random variable $Y_w$ s.t.
\begin{equation}
\nonumber
Y_w=\left\{
\begin{array}{l l}
1 & \text{ if $w$ is inconsistent after reduction of $(D_j)$ for a $X_j\in\Gamma(X_i)$ }\\ 
  & \\%\text{ for a $X_j\in\Gamma(X_i)$ }\\
0 & \text{ otherwise}\\
\end{array} \right.
\end{equation}
Note that $Y_w$ can take value $1$ with probability $\probacond{\forall X_j\in\Gamma(X_i), r_{k_j}(D_j)}{\ACincons{w} }{N}$ and value $0$ with probability $1-\probacond{\forall X_j\in\Gamma(X_i), r_{k_j}(D_j)}{\ACincons{w} }{N}$. Thus,
\begin{equation}
\nonumber
\begin{array}{r l}
E(Y_w)	=&1\times \probacond{.}{\ACincons{w} }{N} + 0\times (1-\probacond{.}{\ACincons{w} }{N}) \\ 
=& \probacond{\forall X_j\in\Gamma(X_i), r_{k_j}(D_j)}{\ACincons{w} }{N}
\end{array}
\end{equation}
The number of values in $D_i$ that are arc-inconsistent knowing that $k_j$ values have been removed from each $D_j$ is $\sum\limits_{w\in D_i}Y_w$:
\begin{align}
E(\tirage{D}_{i}) = E(\sum\limits_{w\in D_i}Y_w) = \sum\limits_{w\in D_i}E(Y_w) \nonumber%= \sum\limits_{w\in D_i} \probacond{.}{\ACincons{w} \in D_i}{N} \nonumber
\end{align}
Consequently:%Then, the expectation of the number of arc-inconsistent values after a domain modification for all the constraints is:
\begin{equation} \label{eq1}
% \small
E(\tirage{D}_i)=\sum\limits_{w_i\in D_i} \probacond{\forall X_j\in\Gamma(X_i), r_{k_j}(D_j)}{\ACincons{w}_i \in D_i}{N}
\end{equation}
\qed
\end{proof}

\begin{proposition}\label{expectationNetwork}
 The expectation of the number of arc-inconsistent values for the whole network, denoted $E(\tirage{N})$, is:
\begin{equation} \label{eq2}
E(\tirage{N})= \sum\limits_{D_i\in\mathcal{D}} \sum\limits_{w_i\in D_i} \probacond{\forall X_j\in\Gamma(X_i), r_{k_j}(D_j)}{\ACincons{w}_i \in D_i}{N} %E(\tirage{D}_i)
\end{equation}
\end{proposition}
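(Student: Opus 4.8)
The plan is to obtain this directly from Proposition~\ref{expectationDomain} by a second application of linearity of expectation, this time over the domains of the network rather than over the values of a single domain. First I would introduce the random variable counting the total number of arc-inconsistent values in the probabilized network, namely $Z = \sum_{D_i\in\mathcal{D}} \sum_{w\in D_i} Y_w$, where $Y_w$ is exactly the indicator variable defined in the proof of Proposition~\ref{expectationDomain} (equal to $1$ if $w$ becomes arc-inconsistent after the reductions $r_{k_j}(D_j)$ for $X_j\in\Gamma(X_i)$, and $0$ otherwise). Equivalently, $Z = \sum_{D_i\in\mathcal{D}} |\tirage{D}_i^{\,\text{inc}}|$ where $|\tirage{D}_i^{\,\text{inc}}|$ denotes the number of arc-inconsistent values found in $D_i$, so that $E(\tirage{N}) = E(Z)$ by definition of the quantity we wish to compute.

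Next I would apply linearity of expectation twice:
\begin{equation}\nonumber
E(\tirage{N}) = E\Big(\sum_{D_i\in\mathcal{D}} \sum_{w_i\in D_i} Y_{w_i}\Big) = \sum_{D_i\in\mathcal{D}} E\Big(\sum_{w_i\in D_i} Y_{w_i}\Big) = \sum_{D_i\in\mathcal{D}} E(\tirage{D}_i),
\end{equation}
the last equality being the statement of Proposition~\ref{expectationDomain}. Substituting the closed form of $E(\tirage{D}_i)$ given by Equation~\eqref{eq1} then yields the announced double sum
\begin{equation}\nonumber
E(\tirage{N}) = \sum_{D_i\in\mathcal{D}} \sum_{w_i\in D_i} \probacond{\forall X_j\in\Gamma(X_i), r_{k_j}(D_j)}{\ACincons{w}_i \in D_i}{N}.
\end{equation}

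There is essentially no obstacle here: the only point worth a sentence of justification is that the indicator variables $Y_{w}$ attached to values lying in different domains need not be independent (the reductions $r_{k_j}(D_j)$ of neighbouring domains interact through shared constraints), but linearity of expectation holds regardless of dependence, so the decomposition above is valid without any independence hypothesis. I would therefore keep the proof to these three lines, noting explicitly that linearity of expectation is the only tool used and that Proposition~\ref{expectationDomain} supplies each summand.
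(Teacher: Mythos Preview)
Your proposal is correct and is exactly the approach the paper intends: the paper's own proof is a one-liner (``Similar to those of Proposition~\ref{expectationDomain}''), and what you have written is precisely that similarity spelled out, namely summing the indicator variables $Y_{w}$ over all domains and invoking linearity of expectation together with Equation~\eqref{eq1}. Your remark that independence of the $Y_{w}$ across domains is unnecessary is a helpful clarification that the paper leaves implicit.
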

\begin{proof}
Similar to those of Proposition~\ref{expectationDomain}.
\qed
\end{proof}

Now we have a complete probabilistic\nobreakdash-based model that allows us, given an event, to compute the probability of a value of being arc-inconsistent for a single constraint (Prop.~\ref{probaValue}) in addition to three interesting probabilistic indicator  
that are:  
\begin{itemize}
 \item The probability for each value of being arc-inconsistent (Prop.~\ref{probaValueAllConstr}).
 \item The expected number of arc-inconsistent values for a given domain (Prop.~\ref{expectationDomain}).
 \item The expected number of arc-inconsistent values for the whole constraint network (Prop.~\ref{expectationNetwork}).
\end{itemize}
The following corollaries provide bounds for each of these three probabilistic indicators. 

% When considering Proposition~\ref{probaValue} for each constraint $C_{ij}\in \mathcal{C}(X_i)$, the probability of being arc-inconsistent for a value $w$ is bounded by the maximum among the probabilities on each constraint.

\begin{coroll} \label{probaValueNet}
For $w\in D_{i}$, the probability of being arc-inconsistent in $\tirage{N}$, knowing that $k_j$ values have been removed from each domain $D_j$ of each variable $X_j\in\Gamma(X_i)$, is greater or equal than the maximum among the probabilities on each constraint:
\begin{equation}
%\small
\probacond{\forall X_j\in\Gamma(X_i), r_{k_j}(D_j)}{\ACincons{w} }{N} \geq \max_{\ X_j\in\Gamma(X_i)}^{} \{\probacond{r_{k_j}(D_j)}{\ACincons{w} }{c_{ij}} \} 
\end{equation}
\end{coroll}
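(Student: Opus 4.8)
The plan is to start from the closed form for $\probacond{\forall X_j\in\Gamma(X_i), r_{k_j}(D_j)}{\ACincons{w}}{N}$ supplied by Proposition~\ref{probaValueAllConstr}, namely $1-\prod_{X_j\in\Gamma(X_i)}(1-p_{ij})$ where I abbreviate $p_{ij}=\probacond{r_{k_j}(D_j)}{\ACincons{w}}{c_{ij}}$, and to reduce the claim to an elementary inequality about products of numbers in $[0,1]$.

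First I would observe that each $p_{ij}$ is a probability, hence $p_{ij}\in[0,1]$: this is immediate from Proposition~\ref{probaValue}, since each of its three branches returns a value in $[0,1]$ (in the middle branch every factor $\frac{k-|\pi_j^{w}|+\ell}{|D_j|-|\pi_j^{w}|+\ell}$ lies in $[0,1]$ because $0\le k\le|D_j|$). Consequently $0\le 1-p_{ij}\le 1$ for every $X_j\in\Gamma(X_i)$. The key step is then the monotonicity of multiplication by factors in $[0,1]$: for any finite family of reals in $[0,1]$, the product is bounded above by each member of the family, since multiplying a nonnegative quantity by a factor $\le 1$ can only decrease it. A one-line induction on $|\Gamma(X_i)|$ makes this precise, with the empty neighborhood as base case (empty product equal to $1$, and $\max$ over the empty set taken to be $0$ by convention, so both sides are vacuously comparable).

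Applying this bound to the particular factor $1-p_{ij^\star}$, where $j^\star$ attains $\max_{X_j\in\Gamma(X_i)} p_{ij}$, gives $\prod_{X_j\in\Gamma(X_i)}(1-p_{ij})\le 1-p_{ij^\star}=1-\max_{X_j\in\Gamma(X_i)} p_{ij}$. Subtracting from $1$ yields $\probacond{\forall X_j\in\Gamma(X_i), r_{k_j}(D_j)}{\ACincons{w}}{N}=1-\prod_{X_j\in\Gamma(X_i)}(1-p_{ij})\ge \max_{X_j\in\Gamma(X_i)} p_{ij}$, which is exactly the statement of the corollary.

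I do not expect a substantive obstacle here: the only points requiring care are justifying $p_{ij}\in[0,1]$ explicitly rather than assuming it, and fixing the convention for the degenerate case $\Gamma(X_i)=\emptyset$; the rest is the monotonicity argument above.
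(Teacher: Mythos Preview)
Your proposal is correct and follows essentially the same route as the paper: both use Proposition~\ref{probaValueAllConstr} to write the network probability as $1-\prod_{X_j\in\Gamma(X_i)}(1-p_{ij})$, observe that each $p_{ij}\in[0,1]$, bound the product above by any single factor $1-p_{ij}$, and specialize to the index achieving the maximum. Your version is slightly more explicit in justifying $p_{ij}\in[0,1]$ from Proposition~\ref{probaValue} and in flagging the degenerate case $\Gamma(X_i)=\emptyset$, but the argument is otherwise identical.
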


\begin{proof}
Knowing that,
\begin{equation} \label{supA}
%0 &\leq \probacond{r_{k_j}(D_j)}{\ACincons{w} \in D_i}{c_{ij}} \leq  \max_{\ X_j\in\Gamma(X_i)}^{} \{\probacond{r_{k_j}(D_j)}{\ACincons{w} \in D_i}{c_{ij}}\} \nonumber \\ 
\forall X_j\in\Gamma(X_i),\ 0 \leq \probacond{r_{k_j}(D_j)}{\ACincons{w} }{c_{ij}}  \leq  \ 1 \nonumber
\end{equation}
we have
\begin{align}
\prod\limits_{X_j\in\Gamma(X_i)}^{}(1-\probacond{.}{.}{c_{ij}}) & \leq 1 - \probacond{.}{.}{c_{ij}} \nonumber
%\probacond{\ldots}{\ACincons{w} \in D_i}{N} & \geq &  1-(1-\probacond{r_{k_j}(D_j)}{\ACincons{w} \in D_i}{c_{ij}})   \nonumber \\
%& \geq & \max_{\ X_j\in\Gamma(X_i)}^{} \{\probacond{r_{k_j}(D_j)}{\ACincons{w} \in D_i}{c_{ij}}\}  \nonumber 
%  \Rightarrow & \prod\limits_{X_j\in\Gamma(X_i)}^{}(1-\probacond{r_{k_j}(D_j)}{\ACincons{w} \in D_i}{c_{ij}}) \leq (1-(\max_{\ X_j\in\Gamma(X_i)}^{} \{\probacond{r_{k_j}(D_j)}{\ACincons{w} \in D_i}{c_{ij}}) \nonumber \\
%  \Rightarrow & 1-(1-\max_{\ X_j\in\Gamma(X_i)}^{} \{\probacond{r_{k_j}(D_j)}{\ACincons{w} \in D_i}{c_{ij}}\} \leq 1-  \prod\limits_{X_j\in\Gamma(X_i)}^{}(1-\probacond{r_{k_j}(D_j)}{\ACincons{w} \in D_i}{c_{ij}}) \nonumber
\end{align}
Considering the maximum value of $\probacond{r_{k_j}(D_j)}{\ACincons{w} }{c_{ij}}$:
\begin{equation}
\prod\limits_{X_j\in\Gamma(X_i)}^{}(1-\probacond{.}{.}{c_{ij}}) \leq 1 -  \max_{\ X_j\in\Gamma(X_i)}^{} \{\probacond{.}{.}{c_{ij}}\} \nonumber
\end{equation}

\begin{equation}
\probacond{\forall X_j\in\Gamma(X_i), r_{k_j}(D_j)}{\ACincons{w} }{N} \geq \max_{\ X_j\in\Gamma(X_i)}^{} \{\probacond{.}{.}{c_{ij}}\} \nonumber
\end{equation}
\qed
\end{proof}

We now can bound the expectation of the number of arc-inconsistent values in the domains, after a domain reduction.
%Corollaries \ref{ExpectedValueBound} and \ref{ExpectedValuesNetworkBound} give bounds for these two quantities.

\begin{coroll}\label{ExpectedValueBound}
For a domain $D_i$ associated with a variable $X_{i}$, the expected value $E(\tirage{D}_i)$ of arc-inconsistent values in $D_i$ knowing that $k_j$ values have been removed from the domain of the variables $X_j \in \Gamma(X_i)$ is bounded by:
\begin{equation}
% \scriptsize
 |D_i| \geq E(\tirage{D}_i) \geq \sum\limits_{w_i} \max_{\ X_j\in\Gamma(X_i)}^{} \{\probacond{r_{k_j}(D_j)}{\ACincons{w}_i }{c_{ij}} \}
 \end{equation}
\end{coroll}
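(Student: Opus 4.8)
The plan is to chain together the two previously established results: Proposition~\ref{expectationDomain}, which gives $E(\tirage{D}_i)$ as a sum over $w_i \in D_i$ of the per-value inconsistency probabilities $\probacond{\forall X_j\in\Gamma(X_i), r_{k_j}(D_j)}{\ACincons{w}_i }{N}$, and Corollary~\ref{probaValueNet}, which lower-bounds each such term by $\max_{X_j\in\Gamma(X_i)} \probacond{r_{k_j}(D_j)}{\ACincons{w}_i }{c_{ij}}$. Summing the inequality of Corollary~\ref{probaValueNet} over all $w_i \in D_i$ is valid because the sum is finite and termwise monotone, and it immediately yields the lower bound
\begin{equation}
E(\tirage{D}_i) \geq \sum\limits_{w_i} \max_{\ X_j\in\Gamma(X_i)}^{} \{\probacond{r_{k_j}(D_j)}{\ACincons{w}_i }{c_{ij}} \}. \nonumber
\end{equation}

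For the upper bound, the point is simply that $E(\tirage{D}_i)$ is the expectation of $\sum_{w\in D_i} Y_w$, where each $Y_w$ is the $0/1$ indicator introduced in the proof of Proposition~\ref{expectationDomain}. Since $\sum_{w\in D_i} Y_w$ takes values in $\{0, 1, \dots, |D_i|\}$ pointwise, its expectation is at most $|D_i|$; equivalently, each $E(Y_w) = \probacond{\forall X_j\in\Gamma(X_i), r_{k_j}(D_j)}{\ACincons{w} }{N} \leq 1$, and there are $|D_i|$ terms. This gives $|D_i| \geq E(\tirage{D}_i)$, completing the two-sided bound.

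There is essentially no hard part here: the whole argument is monotonicity of finite summation applied to the earlier results, plus the trivial observation that a sum of $|D_i|$ probabilities is at most $|D_i|$. The only thing to be slightly careful about is that Corollary~\ref{probaValueNet} is stated for a fixed value $w$, so one must note explicitly that it holds uniformly for every $w_i \in D_i$ (which it does, since its proof only uses $0 \leq \probacond{r_{k_j}(D_j)}{\ACincons{w} }{c_{ij}} \leq 1$, valid for all values) before summing. A short proof suffices:

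\begin{proof}
The upper bound is immediate: by Proposition~\ref{expectationDomain} and the proof of Proposition~\ref{expectationDomain}, $E(\tirage{D}_i) = \sum_{w_i\in D_i} E(Y_{w_i})$ where each $E(Y_{w_i}) = \probacond{\forall X_j\in\Gamma(X_i), r_{k_j}(D_j)}{\ACincons{w}_i }{N} \in [0,1]$; summing $|D_i|$ terms each at most $1$ gives $E(\tirage{D}_i) \leq |D_i|$.
For the lower bound, Corollary~\ref{probaValueNet} applies to every value $w_i \in D_i$ (its proof only uses $0 \leq \probacond{r_{k_j}(D_j)}{\ACincons{w}_i }{c_{ij}} \leq 1$), so
\begin{equation}
\probacond{\forall X_j\in\Gamma(X_i), r_{k_j}(D_j)}{\ACincons{w}_i }{N} \geq \max_{\ X_j\in\Gamma(X_i)}^{} \{\probacond{r_{k_j}(D_j)}{\ACincons{w}_i }{c_{ij}} \} \nonumber
\end{equation}
for each $w_i \in D_i$. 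Summing this inequality over all $w_i \in D_i$ and applying Proposition~\ref{expectationDomain} yields
\begin{equation}
E(\tirage{D}_i) = \sum\limits_{w_i\in D_i} \probacond{\forall X_j\in\Gamma(X_i), r_{k_j}(D_j)}{\ACincons{w}_i }{N} \geq \sum\limits_{w_i} \max_{\ X_j\in\Gamma(X_i)}^{} \{\probacond{r_{k_j}(D_j)}{\ACincons{w}_i }{c_{ij}} \}, \nonumber
\end{equation}
which is the claimed lower bound.
\qed
\end{proof}
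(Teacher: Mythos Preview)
Your proof is correct and follows exactly the approach the paper takes: the paper's own proof is simply ``Straightforward from Prop.~\ref{expectationDomain} and Cor.~\ref{probaValueNet},'' and you have merely spelled out that straightforward derivation, including the trivial upper bound $|D_i|$.
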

\begin{proof}
Straightforward from Prop.~\ref{expectationDomain} and Cor.~\ref{probaValueNet}.
\qed
\end{proof}

\begin{coroll}\label{ExpectedValuesNetworkBound}
The expected value $E(\tirage{N})$ of arc-inconsistent values in the constraint network $N$ is greater or equal than:
\begin{equation}
% \small
 E(\tirage{N}) \geq 
 \sum\limits_{X_i\in\mathcal{X}} \sum\limits_{w_i\in D_i} \max_{\ X_j\in\Gamma(X_i)}^{} \{\probacond{r_{k_j}(D_j)}{\ACincons{w}_i \in D_i}{c_{ij}} \}
 \end{equation}
\end{coroll}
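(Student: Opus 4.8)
The plan is to derive Corollary~\ref{ExpectedValuesNetworkBound} as an immediate consequence of the domain-level bound established in Corollary~\ref{ExpectedValueBound}, together with the exact expression for $E(\tirage{N})$ given in Proposition~\ref{expectationNetwork}. First I would recall from Proposition~\ref{expectationNetwork} that $E(\tirage{N})$ decomposes as a sum over all domains, $E(\tirage{N}) = \sum_{D_i\in\mathcal{D}} E(\tirage{D}_i)$, since each inner double sum is precisely $E(\tirage{D}_i)$ by Equation~(\ref{eq1}). This reduces the statement to bounding each term of the outer sum.

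Next I would apply Corollary~\ref{ExpectedValueBound} term by term: for every variable $X_i\in\mathcal{X}$ (equivalently every domain $D_i\in\mathcal{D}$), we have
\begin{equation}
E(\tirage{D}_i) \geq \sum\limits_{w_i\in D_i} \max_{\ X_j\in\Gamma(X_i)}^{} \{\probacond{r_{k_j}(D_j)}{\ACincons{w}_i \in D_i}{c_{ij}} \}. \nonumber
\end{equation}
Summing this inequality over all $X_i\in\mathcal{X}$ preserves the direction of the inequality (a finite sum of lower bounds is a lower bound for the sum), which yields exactly
\begin{equation}
E(\tirage{N}) = \sum\limits_{X_i\in\mathcal{X}} E(\tirage{D}_i) \geq \sum\limits_{X_i\in\mathcal{X}} \sum\limits_{w_i\in D_i} \max_{\ X_j\in\Gamma(X_i)}^{} \{\probacond{r_{k_j}(D_j)}{\ACincons{w}_i \in D_i}{c_{ij}} \}, \nonumber
\end{equation}
which is the claimed bound.

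There is essentially no real obstacle here: the argument is purely the linearity of the sum and monotonicity of $\geq$ under summation, so the proof is a one-line reduction. The only point worth a sentence of care is to make explicit the identification between the index set $\mathcal{D}$ used in Proposition~\ref{expectationNetwork} and the index set $\mathcal{X}$ used in the corollary statement — these are in bijection since each variable $X_i$ has exactly one domain $D_i$ — so that the reindexing of the outer sum is legitimate. Hence the proof amounts to: invoke Proposition~\ref{expectationNetwork} to split $E(\tirage{N})$ over domains, invoke Corollary~\ref{ExpectedValueBound} on each piece, and sum.
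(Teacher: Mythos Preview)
Your proposal is correct and matches the paper's own argument, which simply notes that the bound is straightforward from Proposition~\ref{expectationNetwork} together with the per-value lower bound. The only cosmetic difference is that you route through Corollary~\ref{ExpectedValueBound} (the domain-level bound) rather than invoking Corollary~\ref{probaValueNet} directly on each summand, but since Corollary~\ref{ExpectedValueBound} is itself just the sum of the Corollary~\ref{probaValueNet} bounds over $D_i$, the two derivations are identical in substance.
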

\begin{proof}
Conclusion from Proposition~\ref{expectationNetwork} and Corollary~\ref{probaValueNet} is straightforward.
\qed
\end{proof}

To sum up, we now have a formula providing a bound for the number of values that are expected to be removed in the constraint network, under the hypothesis of a domain modification.
% \subsection{Computing results of Propositions~\ref{probaValueNet} and Corollary~\ref{ExpectedValueDound}}
\subsection{Propagation of the probabilistic information}
Given a constraint network $N$, Algorithm~\ref{ProbaAC} details how to compute the probability for each value of each domain to be arc-inconsistent (Corollary~\ref{probaValueNet}), and the lower bound of the expected number of arc-inconsistent values for all the domains (Corollary\ref{ExpectedValueBound}).
Algorithm~\ref{ProbaAC} is an adaptation of a coarse grained AC algorithm, AC3~\cite{Mackworth77}. 

\begin{algorithm}[!t]
% \renewcommand{\alglinenumber}[1]{#1.}
% \alglinenumber{\scriptsize}
%\scriptsize
\small
% \caption{\small $\texttt{ProbAC}(\varAlgo{},\domAlgo{},\cstrAlgo{},\remAlgo{},\{\solAlgo{1}{}{},\solAlgo{2}{}{}\dots \solAlgo{n}{}{}\})$\label{ProbaAC}}
\caption{$\texttt{ProbAC}(\varAlgo{},\domAlgo{},\cstrAlgo{},\remAlgo{},\{\solAlgo{1}{}{},\solAlgo{2}{}{}\dots \solAlgo{n}{}{}\})$\label{ProbaAC}}
\begin{algorithmic}[1]
\Require 
	\State $\varAlgo{},\domAlgo{},\cstrAlgo{}$: variables, domains and constraints associated with the CSP
	\State $\remAlgo{}$: table of integers - $\remAlgo{i}$ represents the number of values that we assume to be removed (hypothesis) from the domain $\domAlgo{i}$ of $\varAlgo{i}$
	\State $\solAlgo{i}{}{}$: matrix of integers associated to $X_{i}\in \mathcal{X}$ - $\solAlgo{i}{w}{j}$ represents the number of supports ($|\pi_j(S_{ij}^{w})|$) of $w\in D_{i}$ on the constraint $C_{ij}$
\Ensure 
	\State $\probaAlgo{}{}$: $\probaAlgo{i}{j}$ represents a lower bound of the probability for the value $j$ in the domain of $\varAlgo{i}$ to be arc-inconsistent \Comment{Corollary~\ref{probaValueNet}}
	\State $\espAlgo{}$: $\espAlgo{i}$ represents a lower bound of the expected value of arc-inconsistent values in $\domAlgo{i}$ of $\varAlgo{i}$ \Comment{Corollary~\ref{ExpectedValueBound}}  
	\State $\espNetwork{}$ represents a lower bound of the expected value of arc-inconsistent values in the constraint network \Comment{Corollary~\ref{ExpectedValuesNetworkBound}} 
	\Statex
	\State Set of variables $\mathtt{S} \leftarrow \emptyset$ \label{startinit}; $\espNetwork{} \leftarrow 0$
	\ForAll{$\varAlgo{i} \in \varAlgo{}$}
	    \State $\espAlgo{i} \leftarrow 0$ \label{initExpValue}
	    \If{$\remAlgo{i} > 0$}
	   	\State $\mathtt{S}.\mathtt{add}(\varAlgo{i})$ \label{queue}
	    \EndIf
	    \ForAll{$v_i \in \domAlgo{i}$}
		  \State $\probaAlgo{i}{v_i} \leftarrow 0$ \label{initProba}
	    \EndFor
	\EndFor \label{endinit}
	\While{$\mathtt{S} \neq \emptyset$}\label{mainLoopStart}
	      \State Variable $x_i \leftarrow \mathtt{S}.\mathtt{remove}()$
	      \ForAll{$x_j\in \Gamma(x_i)$}
			\ForAll{$v_j\in \domAlgo{j}$} \label{loopValue}
				\State $\mathtt{double}\ \tmp \leftarrow \probacond{\remAlgo{i}}{\ACincons{v}_j}{c_{ij}}$ \Comment{Proposition~\ref{probaValue}} \label{prop5}
				\If{$\tmp > \probaAlgo{j}{v_j}$} \label{updateProbaMax}
					\State $\espNetwork{} \leftarrow \espNetwork{} -  \probaAlgo{j}{v_j} + \tmp$  \label{updateWholeNet} \Comment{Corollary~\ref{ExpectedValuesNetworkBound}}
					\State $\espAlgo{j} \leftarrow \espAlgo{j} -  \probaAlgo{j}{v_j} + \tmp$ \label{updateExpValue}  \Comment{Corollary~\ref{ExpectedValueBound}}
					\State $\probaAlgo{j}{v_j} \leftarrow \tmp$ \label{updateProba} \Comment{Corollary~\ref{probaValueNet}} 
				\EndIf
			\EndFor
			\If{$\lfloor\espAlgo{j}\rfloor > \remAlgo{j} $} \label{enqueueStart}
				\State $\remAlgo{j} \leftarrow \lfloor\espAlgo{j}\rfloor$  \label{updateRemValue}
				\State $\mathtt{S}.\mathtt{add}(\varAlgo{j})$  \label{enqueueEnd}
			\EndIf
		\EndFor
	\EndWhile\label{mainLoopEnd}
\end{algorithmic}
\end{algorithm}

At the initialization step, from Lines~\ref{startinit} to~\ref{endinit}, Algorithm~\ref{ProbaAC} is initialized. Two elements have to be noticed: First, Line~\ref{queue} populates the set of variables to analyze (i.e., each variable for which it is assumed its domain has been modified), according to the table $\remAlgo{}$; % that contains the number of values potentially removed from each domain; 
Second, Lines~\ref{initExpValue} and~\ref{initProba} respectively initialized the expected value of arc-inconsistent values in $D_i$ 
%(as the number of values that is probably removed from the domains) 
and, the default probability of being arc-inconsistent for each pair variable/value.
Next, the computation of the expected results is ensured as follows.
Line~\ref{prop5} calls the function $\probacondsign{c_{ij}}$ that computes the probability for a given value $v_j \in D_j$ to be arc-inconsistent for a constraint $C_{ij}$ according to the assumption that $\remAlgo{i}$ values are assumed to be removed from $D_i$ (Proposition~\ref{probaValue}).
Lines~\ref{updateProbaMax} and~\ref{updateProba} allow to aggregate the previous information by maintaining the maximum value in the neighborhood of the variable $x_j$ according to formula provided by Corollary~\ref{probaValueNet}. 
Next, Lines~\ref{updateProbaMax} and~\ref{updateExpValue} compute a lower bound on the expected value of arc-inconsistent values in the whole domain $D_j$ according to formula provided by Corollary~\ref{ExpectedValueBound}. 
Next, Lines~\ref{updateProbaMax} and~\ref{updateWholeNet} aggregate the information of Line~\ref{updateProbaMax} to provide a lower bound of the total number of expected arc-inconsistent values for the whole network (Corollary~\ref{ExpectedValuesNetworkBound}).
Finally, Lines~\ref{enqueueStart} to~\ref{enqueueEnd} manages the propagation of the information, by observing the evolution of the expected value of arc-inconsistent values in the domain $D_j$. 

The termination of Algorithm~\ref{ProbaAC} is demonstrated 
%by showing the termination of the main loop (lines \ref{mainLoopStart}-\ref{mainLoopEnd}). 
from lemma \ref{lemmaTermination5} and line \ref{mainLoopStart}: it is ensured that $\mathtt{S}$ becomes empty and no variable enters $\mathtt{S}$. 

\begin{lemma}\label{lemmaTermination5}
A variable $X_i$ enters the set $\mathtt{S}$ at most $|D_i|$ times. 
\end{lemma}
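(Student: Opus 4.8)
The plan is to show that a variable $X_j$ can be added to $\mathtt{S}$ at most $|D_j|$ times by tracking the quantity $\remAlgo{j}$, which is the only thing that gates re-insertion after initialization. The key observation is that $X_j$ enters $\mathtt{S}$ (after the initial population at Line~\ref{queue}) only through Lines~\ref{enqueueStart}--\ref{enqueueEnd}, and this happens precisely when $\lfloor\espAlgo{j}\rfloor > \remAlgo{j}$, in which case $\remAlgo{j}$ is immediately raised to $\lfloor\espAlgo{j}\rfloor$. So every insertion of $X_j$ past the first is accompanied by a \emph{strict} increase of the integer $\remAlgo{j}$ by at least one.

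First I would establish the monotonicity and boundedness of $\espAlgo{j}$. Each value $v_j \in \domAlgo{j}$ contributes $\probaAlgo{j}{v_j}$ to $\espAlgo{j}$ (this invariant is maintained by Lines~\ref{updateExpValue} and~\ref{updateProba}, which together replace the old contribution $\probaAlgo{j}{v_j}$ by $\tmp$). Since each $\probaAlgo{j}{v_j}$ is only ever updated when $\tmp > \probaAlgo{j}{v_j}$ (Line~\ref{updateProbaMax}), the per-value probabilities are nondecreasing, hence so is $\espAlgo{j}$; and since each $\probaAlgo{j}{v_j}$ is a probability, $\probaAlgo{j}{v_j} \le 1$, so $\espAlgo{j} \le |D_j|$ at all times, whence $\lfloor\espAlgo{j}\rfloor \le |D_j|$.

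Next I would combine these facts. Let $m \ge 0$ be the number of times $X_j$ is inserted via Lines~\ref{enqueueStart}--\ref{enqueueEnd}. Just before the $t$-th such insertion the test gives $\lfloor\espAlgo{j}\rfloor > \remAlgo{j}$, and just after, $\remAlgo{j}$ takes the value $\lfloor\espAlgo{j}\rfloor$; since $\espAlgo{j}$ never decreases, the sequence of values taken by $\remAlgo{j}$ at these $m$ insertions is strictly increasing. Writing $r_0$ for the value of $\remAlgo{j}$ at the end of initialization (note $r_0 \ge 0$, and $r_0 \ge 1$ in the only case where $X_j$ is also inserted at Line~\ref{queue}), after the $m$-th insertion we have $\remAlgo{j} \ge r_0 + m$; but $\remAlgo{j} = \lfloor\espAlgo{j}\rfloor \le |D_j|$, so $m \le |D_j| - r_0$. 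Adding the at most one initial insertion at Line~\ref{queue} (which occurs only when $r_0 \ge 1$), the total number of insertions of $X_j$ is at most $m + [r_0 \ge 1] \le (|D_j| - r_0) + [r_0 \ge 1] \le |D_j|$, as claimed.

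The main obstacle is bookkeeping rather than mathematics: one must be careful that the invariant ``$\espAlgo{j} = \sum_{v_j \in D_j} \probaAlgo{j}{v_j}$'' is genuinely preserved by the update block (Lines~\ref{updateWholeNet}--\ref{updateProba}), and that $\remAlgo{j}$ is modified nowhere except at Line~\ref{updateRemValue}, so that the only entry point for re-insertion is the one analyzed above. Once these two facts are checked, the strict-increase-of-a-bounded-integer argument closes the proof.
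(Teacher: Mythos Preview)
Your proof is correct and follows essentially the same strategy as the paper's: each re-insertion of $X_j$ into $\mathtt{S}$ forces a strict increase of the integer $\remAlgo{j}$, which is bounded above by $|D_j|$ because $\espAlgo{j}\le |D_j|$. The paper obtains the bound $\espAlgo{j}\le |D_j|$ by invoking Corollary~\ref{ExpectedValueBound}, whereas you derive it directly from the invariant $\espAlgo{j}=\sum_{v_j}\probaAlgo{j}{v_j}$ together with $\probaAlgo{j}{v_j}\le 1$; you are also more careful than the paper in separating the initial insertion at Line~\ref{queue} from the main-loop insertions, which tightens the bookkeeping but does not change the underlying idea.
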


\begin{proof}
Lines \ref{enqueueStart}-\ref{enqueueEnd} of Algorithm~\ref{ProbaAC} ensure that:
%\begin{itemize}
%\item 
a variable $X_i$ enters the set $\mathtt{S}$ if and only if the number of values $\remAlgo{i}$ assumed to be removed from $D_i$ increases;
%\item 
for each variable $X_i$, the number of values $\remAlgo{i}$ assumed to be removed is monotonously increasing;
%\item 
for each variable $X_i$, $|D_i|$ is an upper bound for the number of values $\remAlgo{i}$ assumed to be removed.
%\end{itemize}
The last bullet has to be detailed. The number of values $\remAlgo{i}$ assumed to be removed from $D_i$ is computed as the integer portion $\lfloor \espAlgo{i}\rfloor$ of the expected value $\espAlgo{i}$. That is, the expected value $\espAlgo{i}$ is an upper bound for $\remAlgo{i}$. In addition, $|D_i|$ is an upper bound for $\espAlgo{i}$ (corollary \ref{ExpectedValueBound}). Thus, $|D_i|$ is an upper bound for $\remAlgo{i}$.
\qed
\end{proof}

The initialization step (lines~\ref{startinit} to~\ref{endinit}), has a time complexity $O(nd)$.
Next, at each run of the main loop algorithm (lines~\ref{mainLoopStart} to~\ref{mainLoopEnd}), 
a variable $X_i$ is removed from the set $\mathtt{S}$ then the Proposition~\ref{probaValue} is evaluated
for each $C_{ij}$ and each $v_j \in D_j$. %This leads to a worst case time complexity of $O(nmd^3)$. 

\begin{proposition}
Algorithm~\ref{ProbaAC} has a worst-case time complexity $O(nmd^3)$,  
where $m= \max\limits_{i=1..n}\{|\Gamma(X_i)|\}$.
\end{proposition}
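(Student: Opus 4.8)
The plan is to bound the initialization phase and the main loop separately, and to control the number of loop iterations via Lemma~\ref{lemmaTermination5}. Throughout, let $d=\max_{i=1..n}|D_i|$ denote the largest domain size (this is the $d$ already used when stating that initialization costs $O(nd)$). First I would note that the initialization block, Lines~\ref{startinit}--\ref{endinit}, does a constant amount of work for each variable and for each (variable, value) pair, hence runs in $O(nd)$.

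Next I would bound the cost of one execution of the body of the \textbf{while} loop. After removing one variable $x_i$ from $\mathtt{S}$, the body ranges over $x_j\in\Gamma(x_i)$, i.e.\ over at most $m$ neighbours, and for each such $x_j$ over $v_j\in\domAlgo{j}$, i.e.\ over at most $d$ values. For each pair $(x_j,v_j)$ it evaluates the function of Proposition~\ref{probaValue} at Line~\ref{prop5}: cases $(a)$ and $(c)$ are $O(1)$, while case $(b)$ is the product $\prod_{\ell=1}^{|\pi_i^{v_j}|}\frac{\remAlgo{i}-|\pi_i^{v_j}|+\ell}{|D_i|-|\pi_i^{v_j}|+\ell}$, which has $|\pi_i^{v_j}|\le|D_i|\le d$ factors, so it costs $O(d)$; the number $|\pi_i^{v_j}|=\solAlgo{j}{v_j}{i}$ is read from the precomputed support matrix in $O(1)$. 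The conditional updates of $\probaAlgo{j}{v_j}$, $\espAlgo{j}$ and $\espNetwork{}$ (Lines~\ref{updateWholeNet}--\ref{updateProba}) and the enqueue test (Lines~\ref{enqueueStart}--\ref{enqueueEnd}) are each $O(1)$. Summing, one iteration of the \textbf{while} loop costs $O(m\cdot d\cdot d)=O(md^2)$.

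Then I would bound the number of iterations of the \textbf{while} loop. Each iteration removes exactly one element from $\mathtt{S}$, so the number of iterations equals the total number of insertions into $\mathtt{S}$ over the whole run. By Lemma~\ref{lemmaTermination5} a variable $X_i$ enters $\mathtt{S}$ at most $|D_i|\le d$ times, hence the total number of insertions is at most $\sum_{i=1}^{n}|D_i|\le nd$, so the loop executes at most $nd$ times. Combining the three estimates, the total running time is $O(nd)+nd\cdot O(md^2)=O(nd)+O(nmd^3)=O(nmd^3)$, as claimed.

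There is no genuine obstacle; the only point that needs care is the cost attributed to Line~\ref{prop5}, where one must observe that case $(b)$ of Proposition~\ref{probaValue} is a product of up to $d$ terms (so $\Theta(d)$, not $O(1)$) while $|\pi_i^{v_j}|$ itself is available in $O(1)$ from the matrix $\solAlgo{j}{}{}$ — everything else is bookkeeping resting on Lemma~\ref{lemmaTermination5}.
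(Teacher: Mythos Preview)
Your proof is correct and follows essentially the same line of reasoning as the paper: bound the number of insertions into $\mathtt{S}$ by $nd$ via Lemma~\ref{lemmaTermination5}, charge each extraction with at most $m$ neighbours times $d$ values times an $O(d)$ evaluation of the product in case~$(b)$ of Proposition~\ref{probaValue}, and add the $O(nd)$ initialization. Your presentation is somewhat more explicit (separating initialization, per-iteration cost, and iteration count, and noting that $\solAlgo{j}{v_j}{i}$ is available in $O(1)$), but the argument is the same.
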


\begin{proof}  
Each time a variable $X_i$ enters $S$, the probability $\probacond{\remAlgo{i}}{\ACincons{v}_j}{c_{ij}}$ is evaluated 
for each value $v_j$ in the domain $D_j$ of each variable $X_j\in\Gamma(X_i)$.
In addition, a variable $X_i$ enters $S$ each time the lower bound $\remAlgo{i}$ of the expected value $E(\tirage{X}_i)$ increases. 
In the worst case, the lower bound $\remAlgo{i}$ increases each time by at most one. So, a variable $X_i$ enters $S$ at most $d$ times. 
Therefore, For each of the $n$ variables $X_i$ in the constraint network, 
the probability $\probacond{\remAlgo{i}}{\ACincons{v}_j}{c_{ij}}$ is evaluated at most $m_id^2$ times, where $\nobreak{m_i=|\Gamma(X_i)|}$.
Furthermore, evaluate the probability has a worst case time complexity of $O(d)$. 
Thus, algorithm~\ref{ProbaAC} has a worst case time complexity of $O(nmd^3)$, where $m=\max\limits_{i=1..n}\{m_i\}$. 
\qed
\end{proof}

\section{Conclusion} \label{conclusion}
This work has presented a probabilistic\nobreakdash-based model for classical binary CSPs. This is an original point of view which provides a fine grained analysis of the constraint network that  allows us to better understand both the macro-structure (i.e., interactions between variables through the constraints) and the micro-structure (i.e., interactions between compatible values) of a binary CSP.

% \newpage
%%%%%%%%%%%%%%%%%%%%%%%%
\bibliographystyle{splncs03}
\bibliography{ijcai16}

\end{document}